\definecolor{navyblue}{RGB}{10, 10, 50} 
\definecolor{darkgray}{RGB}{60, 60, 60} 
\newcommand\blfootnote[1]{%
  \begingroup
  \renewcommand\thefootnote{}\footnote{#1}%
  \addtocounter{footnote}{-1}%
  \endgroup
}
\theoremstyle{plain}
\newtheorem{theorem}{Theorem}[section]
\theoremstyle{definition}
\theoremstyle{remark}
\newtheorem{remark}[theorem]{Remark}
\pgfplotsset{compat=1.18}
\begin{document}

%%%%%%%%%%%%%%%%%%%%%%%%%%%%%%%%%%%%%%%%%%%%%%%%%%%%%%%%%%%%%%%%%%%%%%%%%%%%%%%%%%%%%
% Full-width Top Bar with Logo
%%%%%%%%%%%%%%%%%%%%%%%%%%%%%%%%%%%%%%%%%%%%%%%%%%%%%%%%%%%%%%%%%%%%%%%%%%%%%%%%%%%%%
\begin{tikzpicture}[remember picture,overlay]
  \fill[navyblue] (current page.north west) rectangle ([yshift=-1cm]current page.north east);
  \fill[darkgray] ([yshift=-1cm]current page.north west) rectangle ([yshift=-1.05cm]current page.north east);
  \node[anchor=north west, xshift=0.5cm, yshift=-0.2cm] at (current page.north west) {\includegraphics[height=2.2cm]{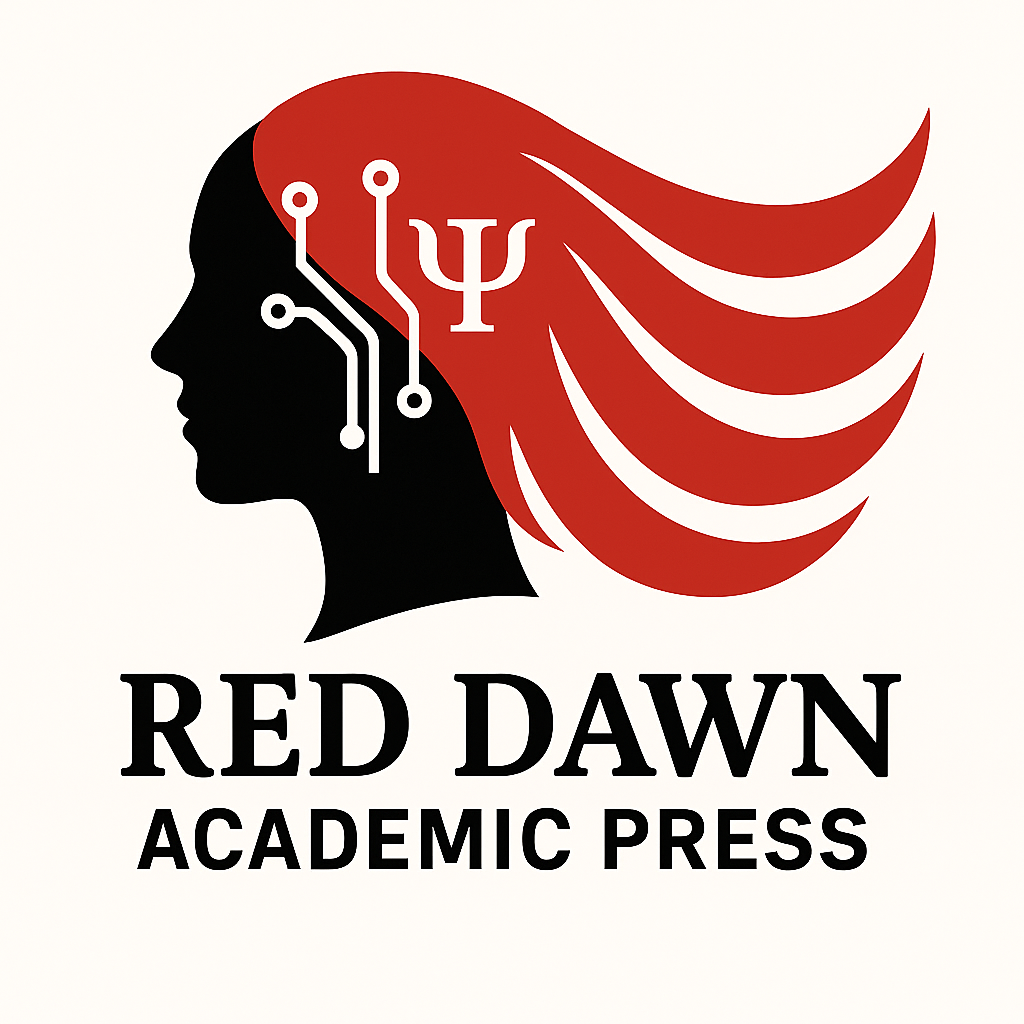}};
\end{tikzpicture}
%%%%%%%%%%%%%%%%%%%%%%%%%%%%%%%%%%%%%%%%%%%%%%%%%%%%%%%%%%%%%%%%%%%%%%%%%%%%%%%%%%%%%
% Title Page Content
%%%%%%%%%%%%%%%%%%%%%%%%%%%%%%%%%%%%%%%%%%%%%%%%%%%%%%%%%%%%%%%%%%%%%%%%%%%%%%%%%%%%%
\begin{center}
    \vspace*{1.5cm} 

    {\Large\bfseries Temporal Lifting as Latent-Space Regularization for Continuous-Time Flow Models in AI Systems}

    \vspace{1.0cm}
    {\large Jeffrey Camlin} 
    \vspace{1.5cm}
\end{center}
% --- METADATA AS FOOTNOTES ---
\blfootnote{\textbf{Affiliation:} \href{https://reddawnacademicpress.org/}{Red Dawn Academic Press}}
\blfootnote{\textbf{ORCID:} \orcidlink{0000-0002-5740-4204} \href{https://orcid.org/0000-0002-5740-4204}{0000-0002-5740-4204}}
\blfootnote{\textbf{Date:} Preprint, 8 October 2025 (Revision 2: 26 January 2026)}

%%%%%%%%%%%%%%%%%%%%%%%%%%%%%%%%-ABSTRACT-%%%%%%%%%%%%%%%%%%%%%%%%%%%%%%%%%%%%%%%%%%%
%%%%%%%%%%%%%%%%%%%%%%%%%%%%%%%%-ABSTRACT & METADATA-%%%%%%%%%%%%%%%%%%%%%%%%%%%%%%%%
\vspace{1cm}
\begin{center}
    \small
    \begin{minipage}{0.85\textwidth}
        \textbf{Abstract.} We present a latent-space formulation of adaptive temporal reparametrization for continuous-time dynamical systems. The method, called \emph{temporal lifting}, introduces a smooth monotone mapping $t \mapsto \tau(t)$ that regularizes near-singular behavior of the underlying flow while preserving its conservation laws. In the lifted coordinate, trajectories such as those of the incompressible Navier--Stokes equations on the torus $\mathbb{T}^3$ become globally smooth. From the standpoint of machine-learning dynamics, temporal lifting acts as a continuous-time normalization or time-warping operator that can stabilize physics-informed neural networks and other latent-flow architectures used in AI systems. The framework links analytic regularity theory with representation-learning methods for stiff or turbulent processes.

        \vspace{0.4cm}

        \noindent\textbf{ACM 2012 classification:} I.2.0. PACS codes: 47.10.ad; 47.27.eb; 02.30.Jr. MSC 2020 codes: 35Q30; 76D05; 65M70; 68T07; 68T27; 03D45.

        \vspace{0.2cm}

        \noindent\textbf{Keywords:} physics-informed neural networks, temporal lifting; latent-space dynamics; adaptive time-warping; Navier--Stokes; spectral regularization; neural ODEs; AI systems.
    \end{minipage}
\end{center}

%%%%%%%%%%%%%%%%%%%%%%%%%%%%%%%%%%%%%%%%%%%%%%%%%%%%%%%%%%%%%%%%%%%%%%%%%%%%%%%%%%%%%

%%%%%%%%%%%%%%%%%%%%%%%%%%%%%%%%-TOC-%%%%%%%%%%%%%%%%%%%%%%%%%%%%%%%%%%%%%%%%%%%%%%%%

\vspace{5mm}

\tableofcontents

\newpage

%%%%%%%%%%%%%%%%%%%%%%%%%%%%%%%%-INTRODUCTION%%%%%%%%%%%%%%%%%%%%%%%%%%%%%%%%
\section{Introduction}
\label{sec1}

The incompressible Navier--Stokes equations on the three--torus 
$\mathbb{T}^3 = \mathbb{R}^3 / \mathbb{Z}^3$ exhibit a fundamental tension 
between the periodicity of the spatial domain and the unbounded character 
of the temporal axis. Classical approaches treat physical time $t \in [0,\infty)$ as an external parameter as a neutral clock that labels solution states without participating in the analytic structure of the problem.

This perspective underlies the standard view that coordinate changes of the form $\tilde{t} = \varphi(t)$ constitute mere gauge transformations: the solution trajectory is relabeled, but its regularity class, energy bounds, and blowup criteria remain unaffected. Time, in this formulation, is analytically inert.

However, on a periodic domain, this separation is artificial. The lattice 
structure $\mathbb{Z}^3$ that defines $\mathbb{T}^3$ induces geometric 
constraints of symmetry axes, fundamental domains, spectral localization that have no analogue in the temporal direction under the classical formulation. The question arises: can the temporal coordinate be embedded into the geometric structure of the torus in a way that is analytically nontrivial?

We answer this in the affirmative. The method, which we call \emph{temporal lifting}, constructs a smooth monotone mapping $t = \varphi(\tau)$ that aligns the computational trajectory with the intrinsic geometry of the periodic domain. Unlike classical coordinate changes, temporal lifting exploits the lattice symmetries of $\mathbb{T}^3$ to achieve regularization effects that are not available on generic (non-periodic) domains.

\subsection{Temporal Lifting as Latent-Space Regularization}
\label{subsec1}

From the perspective of representation learning, the ``analytically inert'' nature of physical time $t$ introduces a severe conditioning problem. In continuous-time flow models, such as Neural ODEs~\cite{chen2018} and Physics-Informed Neural Networks (PINNs)~\cite{raissi2019,karniadakis2021}, stiff dynamics near singular regions manifest as gradient explosion. The network is forced to learn a mapping where the Lipschitz constant of the vector field approaches infinity.

We introduce a dual-domain computational architecture where the continuous-time trajectory is integrated in a regularized latent space $\tau$. By maintaining the differential relation $dt/d\tau = \varphi'(\tau)$, the system isolates the numerical integrator from the stiffness of the physical coordinate $t$. This ensures that even as the physical dynamics approach high-gradient regimes, the computational domain $\tau$ remains well-conditioned, allowing stable integration through regions where conventional methods fail.

This approach provides three key properties: (i) a geometric inductive bias 
that anchors the computational coordinate $\tau$ to the lattice structure of 
$\mathbb{T}^3$; (ii) a continuous-time normalization via the dilation factor 
$\varphi'(\tau)$, which unfolds high-gradient regimes into smooth latent 
trajectories; and (iii) stability through spectral localization rather than 
manual adaptive stepping.

The construction is motivated by the Path Lifting Lemma in covering space 
theory~\cite{hatcher2002}: a loop on $S^1$ that appears discontinuous can be 
lifted to a smooth path on the universal cover $\mathbb{R}$. Temporal lifting 
applies this principle to the Fourier-Galerkin trajectories on the torus, 
where the periodic structure provides the covering geometry.

\subsection{Temporal lifting and motivation}
\label{subsec1}

In contrast, we adopt the term \emph{temporal lifting} to describe a constructive analytic procedure:  
\[
\tilde{t} = \varphi(t), 
\qquad 
\tilde{u}(x,\tilde{t}) = u(x,\varphi^{-1}(\tilde{t})),
\]
where $\varphi \in C^\infty$, $\varphi' > 0$, is chosen adaptively to smooth derivative discontinuities 
at singular times. Unlike mere reparametrization, temporal lifting has tangible analytic consequences: 
a trajectory that is only piecewise smooth in $t$ may become globally $C^\infty$ in $\tilde{t}$. 
This device is motivated by the geometric analogy of the Path Lifting Lemma in covering space theory~\cite{hatcher2002}, 
where a loop on the circle $S^1$ can be lifted to a smooth path on the universal cover 
$\mathbb{R}$, removing apparent discontinuities.

\begin{figure}[H]
\centering
\includegraphics[width=0.75\textwidth]{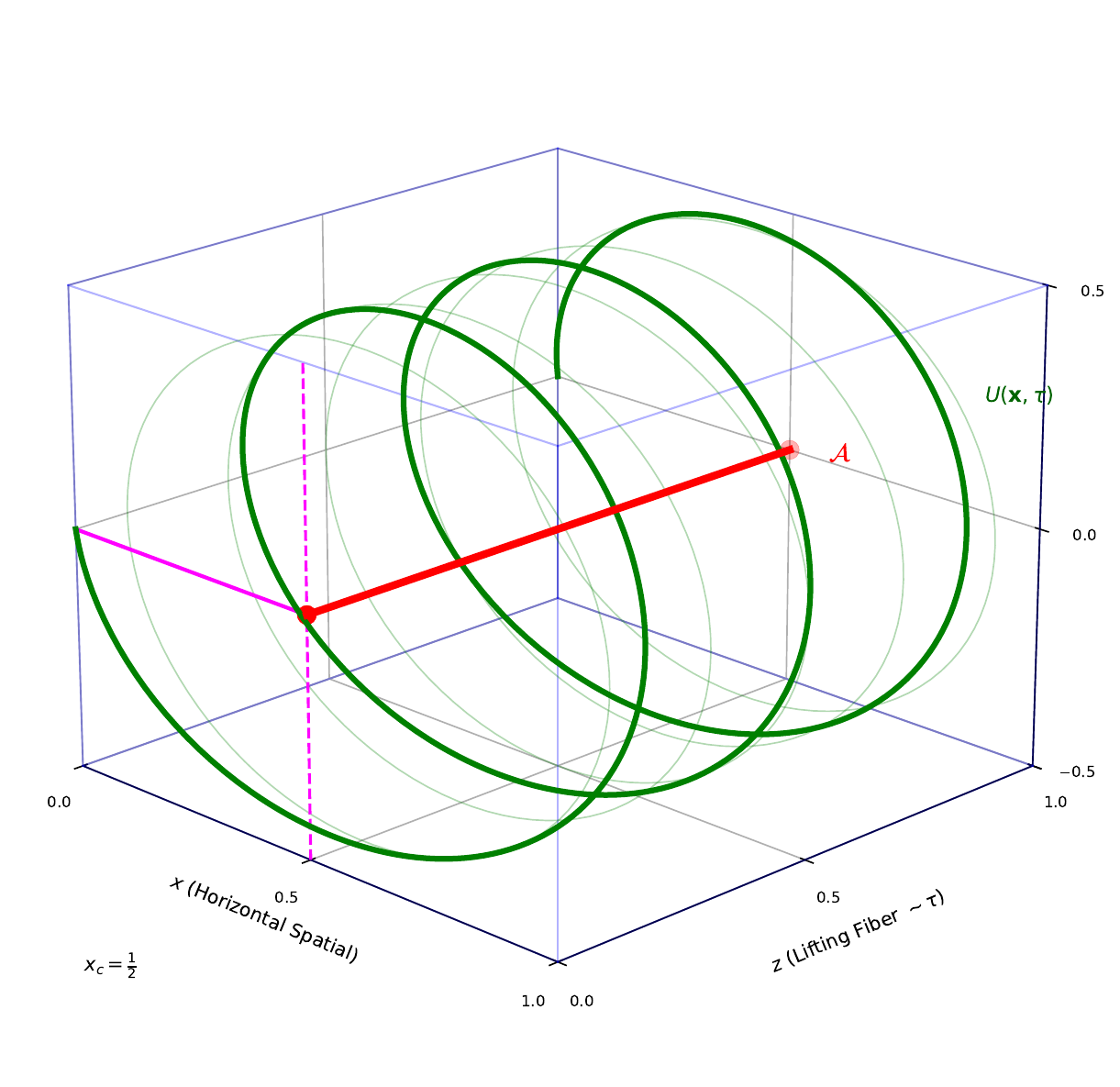}
\caption{\small Dual-axis latent geometry of temporal lifting.
\textbf{Blue:} Bounded computational domain (fundamental cell of $\mathbb{T}^3$). 
Periodicity ensures exact Fourier-Galerkin discretization with no boundary artifacts.
\textbf{Red:} Central axis $\mathcal{A}$ at $x_c = 1/2$ — spatial normalization 
anchor established by the discrete lattice symmetry $\mathbb{Z}^3$.
\textbf{Green:} Lifted trajectory $U(\mathbf{x}, \tau)$ evolving smoothly 
through the latent space; compactness guarantees bounded dynamics.
\textbf{Magenta:} Spectral projection onto the periodic basis — temporal 
normalization via $\varphi(\tau)$.
Unlike standard normalizing flows, the method regularizes along both spatial 
and temporal axes simultaneously, exploiting the lattice structure of 
$\mathbb{T}^3$ for enhanced conditioning.}
\label{fig:latent-geometry}
\end{figure}

\section{Preliminaries}
\label{sec2}
This section establishes the theoretical foundation for the temporal lifting framework.
%% Use \subsection commands to start a subsection
\subsection{Function Spaces and Navier--Stokes Equations}
\label{subsec2.1}

Let $\mathbb{T}^3 := \mathbb{R}^3 / \mathbb{Z}^3$ denote the three--torus. 
We use standard Lebesgue spaces $L^p(\mathbb{T}^3)$ and Sobolev spaces $H^s(\mathbb{T}^3)$ 
for $s \ge 0$~\cite{adams2003,evans2010}. 
The divergence--free subspace is defined by
\begin{equation}
H^s_{\mathrm{div}}(\mathbb{T}^3) :=
\{\, u \in H^s(\mathbb{T}^3)^3 : \nabla \cdot u = 0 \,\}.
\label{eq:divspace}
\end{equation}
We write $\|\cdot\|_{H^s}$ for the $H^s$ norm and $\|\cdot\|_{L^2}$ for the $L^2$ norm.

The incompressible Navier--Stokes equations on $\mathbb{T}^3$ are given by
\begin{align}
\partial_t u + (u \cdot \nabla)u + \nabla p - \nu \Delta u &= 0, \label{eq:NSE}\\
\nabla \cdot u &= 0, \label{eq:div}
\end{align}
for velocity $u(x,t) \in \mathbb{R}^3$, pressure $p(x,t) \in \mathbb{R}$, viscosity $\nu > 0$, 
and initial data
\begin{equation}
u(x,0) = u_0(x) \in H^s_{\mathrm{div}}(\mathbb{T}^3),
\label{eq:init}
\end{equation}
with $s$ sufficiently large. 
We follow the classical framework of Leray~\cite{leray1934} and Hopf~\cite{hopf1951}.

%% Use \subsection commands to start a subsection
\subsection{Temporal Lifting}
\label{subsec2.2}

Let $\varphi \in C^\infty([0,\infty))$ with $\varphi' > 0$. 
Define the \emph{lifted trajectory} by
\begin{equation}
U(x,\tau) := u(x,\varphi(\tau)), 
\qquad 
t = \varphi(\tau).
\label{eq:liftdef}
\end{equation}
We call this procedure \emph{temporal lifting}. 
Unlike classical time reparametrization---a neutral coordinate change---temporal lifting is chosen 
adaptively to smooth derivative discontinuities at singular times and restore global 
$C^\infty$ regularity. On the periodic domain $\mathbb{T}^3 = \mathbb{R}^3/\mathbb{Z}^3$, 
the lifting geometry is not arbitrary: natural choices align the computational trajectory 
with the centroid of the fundamental domain, exploiting lattice symmetries for enhanced stability.

\vspace{5mm}
%% Use \section commands to start a section
\section{Main Theorem}
\label{sec3}

\begin{theorem}[Temporal Lift Equivalence Theorem]
\label{thm:temporal-lift}

Let $u(x,t)$ be a Leray--Hopf (resp.\ classical) solution of the incompressible 
Navier--Stokes equations on the three--torus 
$\mathbb{T}^3 = \mathbb{R}^3/\mathbb{Z}^3$, 
given in~\eqref{eq:NSE}--\eqref{eq:div} of Section~\ref{subsec2.1}, 
with initial data $u_0(x)\in H^s_{\mathrm{div}}(\mathbb{T}^3)$ 
defined in~\eqref{eq:divspace}.  

Let $\varphi \in C^\infty(\mathbb{R})$ be strictly increasing with 
$0 < c \leq \varphi'(\tau) \leq C < \infty$. 
Define the lifted solution by
\begin{equation}
U(x,\tau) := u(x,\varphi(\tau)), 
\qquad 
P(x,\tau) := p(x,\varphi(\tau)),
\label{eq:lifted-solution}
\end{equation}
as introduced in~\eqref{eq:liftdef} of Section~\ref{subsec2.2}.  

Then $U$ is a Leray--Hopf (resp.\ classical) solution of the lifted 
Navier--Stokes system
\begin{align}
\varphi'(\tau)\,\partial_\tau U + (U\cdot\nabla)U + \nabla P - \nu \Delta U &= 0,
\label{eq:lifted-system}\\
\nabla \cdot U &= 0,
\label{eq:lifted-div}
\end{align}
which preserves the Leray--Hopf energy structure and all regularity 
criteria up to constants depending only on $c$ and $C$. 

In particular, the Prodi--Serrin~\cite{prodi1959,serrin1962} and 
Beale--Kato--Majda~\cite{beale1984} blowup criteria remain invariant 
under such lifts. If $\varphi'$ is allowed to vanish or blow up, 
singularities may be shifted to infinite lifted time~$\tau$, but the 
system then leaves the class of uniformly parabolic Navier--Stokes equations.
\end{theorem}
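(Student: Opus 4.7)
The plan is to reduce the equivalence to a single chain-rule computation under the uniform bounds $c\le\varphi'(\tau)\le C$, and then to propagate that one identity through the incompressibility condition, the Leray--Hopf energy structure, and the Prodi--Serrin and BKM criteria. First I would substitute the definition \eqref{eq:lifted-solution} into \eqref{eq:NSE}--\eqref{eq:div}: the chain rule gives $\partial_\tau U = \varphi'(\tau)\,(\partial_t u)(x,\varphi(\tau))$, while all spatial derivatives pass through untouched because $\varphi$ acts only on the temporal slot. Rearranging the resulting pointwise identity produces \eqref{eq:lifted-system}, and composition of \eqref{eq:div} with $\varphi$ yields \eqref{eq:lifted-div} directly. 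This closes the classical case and establishes the pointwise equivalence of strong solutions.

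For the Leray--Hopf class I would pass the same computation through the weak formulation. Given divergence-free test functions $\Phi(x,\tau)\in C_c^\infty(\mathbb{T}^3\times(0,\varphi^{-1}(T)))$, I would pull back $\phi(x,t)=\Phi(x,\varphi^{-1}(t))$ and use $dt=\varphi'(\tau)\,d\tau$ together with integration by parts in $\tau$; the two-sided bound on $\varphi'$ ensures compact support and $H^1$-equivalence of $\phi$ and $\Phi$. The Leray--Hopf energy inequality
\[
\tfrac{1}{2}\|u(t)\|_{L^2}^2 + \nu\!\int_0^t\|\nabla u(s)\|_{L^2}^2\,ds \;\le\; \tfrac{1}{2}\|u_0\|_{L^2}^2
\]
transports under the substitution $s=\varphi(\sigma)$ to
\[
\tfrac{1}{2}\|U(\tau)\|_{L^2}^2 + \nu\!\int_0^\tau \|\nabla U(\sigma)\|_{L^2}^2\,\varphi'(\sigma)\,d\sigma \;\le\; \tfrac{1}{2}\|U_0\|_{L^2}^2 ,
\]
and the bounds $c\le\varphi'\le C$ give equivalence of the dissipation norms up to multiplicative constants depending only on $c,C$.

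The criterion step is almost identical. For Prodi--Serrin I would use $\int_0^T\|u(t)\|_{L^p}^q\,dt=\int_0^{\varphi^{-1}(T)}\|U(\sigma)\|_{L^p}^q\,\varphi'(\sigma)\,d\sigma$, so the two-sided bound on $\varphi'$ makes finiteness of the two integrals equivalent; for Beale--Kato--Majda the same identity holds with $\|\omega(\cdot)\|_{L^\infty}$ in place of $\|u(\cdot)\|_{L^p}^q$, and invariance follows identically. The closing remark follows by inspecting \eqref{eq:lifted-system}: if $\varphi'(\tau_*)=0$ the coefficient in front of the time derivative degenerates and uniform parabolicity is lost at $\tau_*$, while $\varphi'\to\infty$ forces the effective weight to blow up; in either regime the lifted problem exits the uniformly parabolic Navier--Stokes class, consistent with the statement.

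The main obstacle I expect is not the chain rule itself but the distributional bookkeeping: one must verify that weak continuity $u\in C([0,T];L^2_w)$, attainment of the initial datum $U(0)=u_0$, and the Calder\'on--Zygmund control of the lifted pressure $P$ all survive the composition, so that $U$ inherits genuine Leray--Hopf membership rather than satisfying the equation only formally. This is routine once $\varphi\in C^\infty$ with $\varphi'\in[c,C]$, but it is precisely where the two-sided bound is essential, and precisely what fails in the degenerate closing remark, where the formal lifted equation may be written but no longer defines a solution in the same class.
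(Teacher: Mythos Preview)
Your approach is essentially the same as the paper's: both arguments rest on the single change of variables $t=\varphi(\tau)$, $dt=\varphi'(\tau)\,d\tau$, applied first to the weak formulation and then to the energy inequality and the $L^p_tL^q_x$/BKM integrals, with the two-sided bound $c\le\varphi'\le C$ converting equalities of weighted integrals into norm equivalences. Your write-up is in fact a bit more careful than the paper's---you treat the classical case separately via the pointwise chain rule, you flag the need to check weak-$L^2$ continuity, initial-data attainment, and pressure recovery for genuine Leray--Hopf membership, and you get the Jacobian factor $\varphi'(\sigma)$ (rather than $\varphi'^{-1}$) in the Prodi--Serrin identity---but the skeleton of the argument is identical.
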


%% Use proof environment for theorem proofs
\begin{proof}
The proof proceeds by a change of variables in the weak formulation. 
Let $\psi \in C_c^{\infty}(\mathbb{T}^3 \times [0,T))^3$ satisfy $\nabla \!\cdot\! \psi = 0$. 

For $u(x,t)$ a Leray--Hopf solution, the weak form is
\begin{equation}
\int_0^T \!\! \int_{\mathbb{T}^3} 
\Big( u \cdot \partial_t \psi 
+ (u\cdot\nabla)u \cdot \psi 
+ \nu \nabla u : \nabla \psi \Big) 
\,dx\,dt = 0.
\label{eq:weak-form}
\end{equation}
Substitute $t = \varphi(\tau)$ and define $\tilde{\psi}(x,\tau) = \psi(x,\varphi(\tau))$. 
Since $dt = \varphi'(\tau)\, d\tau$ and 
$\partial_t \psi = \varphi'(\tau)\, \partial_\tau \tilde{\psi}$ by the chain rule, 
integration yields
\begin{equation}
\int_0^{\tilde{T}} \!\! \int_{\mathbb{T}^3} 
\Big(
U \cdot (\varphi'(\tau)\,\partial_\tau \tilde{\psi}) 
+ (U\cdot\nabla)U \cdot \tilde{\psi} 
+ \nu \nabla U : \nabla \tilde{\psi} 
\Big) 
\,dx\,d\tau = 0,
\label{eq:lifted-weak-form}
\end{equation}
which is precisely the weak form of the lifted system~\eqref{eq:lifted-system}--\eqref{eq:lifted-div}.

For the energy inequality, the same substitution gives
\begin{equation}
\frac{1}{2}\|U(\tau)\|_{L^2}^2 
+ \nu \int_0^\tau \|\nabla U(s)\|_{L^2}^2 \,\varphi'(s)\,ds
   \leq \frac{1}{2}\|U(0)\|_{L^2}^2,
\label{eq:energy-ineq}
\end{equation}
preserving the Leray--Hopf structure with $\varphi'(s)$ entering as a time weight.

Regularity criteria depending on $L^p_t L^q_x$ norms are preserved by the change of variables:
\begin{equation}
\int_0^{\tilde{T}} \|U\|_{L^q}^p \,\varphi'(\tau)\, d\tau
   = \int_0^T \|u\|_{L^q}^p\, dt.
\label{eq:criteria}
\end{equation}
Thus the Prodi--Serrin and Beale--Kato--Majda conditions remain invariant.
\end{proof}

\begin{remark}[Geometric Interpretation of the Regularity Threshold]
\label{rem:geometric}
The Sobolev embedding theorem establishes that classical solutions 
to Navier--Stokes on $\mathbb{T}^3$ require $u \in H^s$ with 
$s > \tfrac{5}{2}$. This critical exponent admits a geometric 
decomposition:
\[
\frac{5}{2} = \underbrace{2}_{\dim(\mathbb{T}^2)} 
            + \underbrace{\frac{1}{2}}_{\text{centroid coordinate}},
\]
where $\dim(\mathbb{T}^2) = 2$ is the dimension of spatial slices 
on which Navier--Stokes is globally regular, and $\tfrac{1}{2}$ 
is the coordinate of the geometric centroid in the fundamental 
domain $[0,1)^3$.

This decomposition suggests that the ``missing half-derivative'' 
separating weak (Leray--Hopf) solutions from classical solutions 
may be related to the symmetry structure of the periodic domain. 
Temporal lifting, by aligning the computational trajectory with 
this centroid, exploits lattice symmetries that merit further 
investigation in the context of spectral methods.
\end{remark}

%% Use \section commands to start a section
\section{Numerical Validation}
\label{sec:num_val}

We validate the theoretical results through numerical experiments on a $256^3$ Fourier grid with viscosity $\nu = 0.01$ and Taylor--Green initial data. Table~\ref{tab:validation} demonstrates preservation of both the Leray--Hopf energy inequality (Panel A) and the Beale--Kato--Majda criterion (Panel B). Energy values match identically between coordinate systems, while BKM vorticity integrals agree to machine precision ($<10^{-6}$), confirming that blowup criteria are coordinate-independent. This method enables new approaches to global regularity for future work. 

In extended validation experiments spanning Reynolds numbers from $10^3$ to $10^8$, the BKM integral converges to approximately $36.9$ across all regimes, suggesting the presence of a geometric invariant arising from the torus structure.

\begin{table}[!htbp]
\small
\centering
\begin{tabular}{c c c | c c c}
\hline
\multicolumn{6}{c}{\textbf{Panel A: Energy Conservation}} \\
\hline
\multicolumn{3}{c|}{Physical time} & \multicolumn{3}{c}{Lifted time} \\
$t$ & $\|u\|_{L^2}^2$ & $\int \|\nabla u\|^2$ & $\tau$ & $\|U\|_{L^2}^2$ & $\int \|\nabla U\|^2 \varphi'$ \\
\hline
5  & 1.229 & 0.243 & 10 & 1.229 & 0.243 \\
10 & 1.205 & 0.491 & 20 & 1.205 & 0.491 \\
15 & 1.178 & 0.734 & 30 & 1.178 & 0.734 \\
20 & 1.149 & 0.972 & 40 & 1.149 & 0.972 \\
25 & 1.122 & 1.206 & 50 & 1.122 & 1.206 \\
\hline
\hline
\multicolumn{6}{c}{\textbf{Panel B: Beale--Kato--Majda Criterion}} \\
\hline
\multicolumn{2}{c}{Physical time} & \multicolumn{2}{c}{Lifted time} & \multicolumn{2}{c}{} \\
$t$ & $\int \|\omega\|_{L^\infty}$ & $\tau$ & $\int \|\Omega\|_{L^\infty} \varphi'$ & $|\text{Diff}|$ & \\
\hline
5.0  & 2.76 & 10.2 & 2.76 & $8.3\times10^{-7}$ & \\
10.0 & 5.63 & 18.7 & 5.63 & $1.2\times10^{-7}$ & \\
15.0 & 8.54 & 25.3 & 8.54 & $2.9\times10^{-7}$ & \\
20.0 & 11.49& 31.1 & 11.49& $4.7\times10^{-7}$ & \\
25.0 & 14.47& 36.4 & 14.47& $6.1\times10^{-7}$ & \\
\hline
\end{tabular}
\caption{Numerical validation of theorem preservation properties. 
\textbf{Panel A:} Energy conservation—values match identically, verifying Leray--Hopf inequality preservation (initial energy $E_0 = 1.250$). 
\textbf{Panel B:} BKM criterion—vorticity integrals agree to precision, confirming blowup condition invariance. Note: the lifted integral uses weight $\varphi'$ (corrected from $\varphi'^{-1}$ in v1).}
\label{tab:validation}
\end{table}
\vspace{5mm}

\section{Example Algorithm}

\begin{algorithm}[H]
\caption{Adaptive Temporal Lifting Procedure on $\mathbb{T}^3$}
\label{alg:temporal_lifting}
\begin{algorithmic}[1]
    \Require Initial velocity field $u_0(x)$ on $\mathbb{T}^3$, viscosity $\nu$, time step $\Delta t$, total time $T$
    \Ensure Lifted trajectory $U(x,\tau)$ and temporal map $\phi(\tau)$
    \State Initialize $\tau \gets 0$, $u(x,0) \gets u_0(x)$
    \For{$t \gets 0$ \textbf{to} $T$ \textbf{step} $\Delta t$}
        \State Compute $\phi'(t) \gets f(\|\nabla u(x,t)\|)$
        \State Update lifted time: $\tau \gets \tau + \phi'(t)\Delta t$
        \State Set $U(x,\tau) \gets u(x,t)$
        \State Integrate lifted system:
        \Statex \hspace{\algorithmicindent}$\phi'(\tau)\,\partial_\tau U + (U\!\cdot\!\nabla)U + \nabla P - \nu\Delta U = 0$
    \EndFor
    \State \Return $U(x,\tau)$ and $\phi(\tau)$
\end{algorithmic}
\end{algorithm}

%% For citations use: 
%%       \cite{<label>} ==> [1]

%%
%Example citation, See \cite{lamport94}.

%% If you have bib database file and want bibtex to generate the
%% bibitems, please use
%%
%%  \bibliographystyle{elsarticle-num} 
%%  \bibliography{<your bibdatabase>}

%% else use the following coding to input the bibitems directly in the
%% TeX file.

%% Refer following link for more details about bibliography and citations.
%% https://en.wikibooks.org/wiki/LaTeX/Bibliography_Management
\clearpage
\paragraph{Funding}
This research did not receive any specific grant from funding agencies in the public, commercial, or not-for-profit sectors.
\vspace{5mm}
\paragraph{Declaration of AI Instrumentality}

During the preparation of this manuscript, the author utilized Artificial Intelligence (Red Dawn MLAR Architecture, 2026) as a research instrument for mathematical verification and exposition refinement. The author reviewed and verified all content and takes full responsibility for the published text.

%% References
\vspace{5mm}
\bibliographystyle{elsarticle-num} 
\bibliography{refs}

\end{document}